\documentclass{article}

\usepackage[preprint]{neurips_2026}

\usepackage[utf8]{inputenc} 
\usepackage[T1]{fontenc}    
\usepackage{hyperref}       
\usepackage{url}            
\usepackage{booktabs}       
\usepackage{amsfonts}       
\usepackage{nicefrac}       
\usepackage{microtype}      
\usepackage{xcolor}         

\usepackage{amsmath}
\usepackage{tabularx}
\usepackage{amsthm}
\usepackage{multirow}
\usepackage{pifont}
\usepackage{amssymb}
\usepackage{multirow}
\usepackage{graphicx}
\usepackage{caption}

\title{Symmetry in the Wild: The Role of Equivariance in Neural Fluid Surrogates}

\author{%
  Patryk Rygiel \\
  Department of Applied Mathematics\\
  Technical Medical Centre\\
  Cardiovascular Health Technology Centre\\
  University of Twente\\
  \texttt{p.t.rygiel@utwente.nl} \\
  \And
  Julian Suk \\
  Department of Computer Science\\ 
  Munich Center for Machine Learning\\
  Technical University of Munich\\
  \texttt{julian.suk@tum.de}\\
  \And
  Kak Khee Yeung \\
  Department of Surgery\\
  Amsterdam UMC, Location University of Amsterdam\\
  Atherosclerosis \& Aortic diseases \\
  Amsterdam Cardiovascular Sciences \\
  \texttt{k.yeung@amsterdamumc.nl}
  \And
  Christoph Brune \\
  Department of Applied Mathematics\\ 
  Digital Society Institute \\
  Technical Medical Centre\\
  University of Twente\\
  \texttt{c.brune@utwente.nl}
  \And
  Jelmer M. Wolterink\\
  Department of Applied Mathematics\\
  Technical Medical Centre\\
  Cardiovascular Health Technology Centre\\
  University of Twente\\
  \texttt{j.m.wolterink@utwente.nl}\\
}

\begin{document}
\maketitle

\begin{abstract}
Neural surrogates enable orders-of-magnitude acceleration of computational fluid dynamics (CFD) simulations, with the potential to transform engineering and healthcare workflows.
Neural surrogate use in real-world applications requires addressing scalability to large, high‑resolution surface and volume meshes, as well as to bespoke architectures, and accounting for limited training data through the use of inductive biases.
Group-equivariant architectures are a principled way to introduce such bias, yet they can be detrimental when the learning problem itself breaks symmetry, for example, due to strong distributional alignment in the dataset.
In this work, we investigate under which conditions equivariance improves generalization in neural CFD surrogates across tasks with increasing levels of distributional alignment and realism, covering automotive aerodynamics and blood flow (hemodynamics). 
To systematically assess the added value of equivariance at the limit of problem scaling, we introduce the Anchored-Branched Geometric Algebra Transformer (AB-GATr), a neural surrogate that integrates scalability and symmetry preservation to efficiently model coupled surface and volume quantities in an $E(3)$-equivariant manner. 
We find that on strongly aligned aerodynamics datasets, i.e., those that break symmetry, enforcing equivariance can degrade in-distribution performance. 
In contrast, across hemodynamic benchmarks with diverse geometries and varying alignment, equivariance is consistently beneficial.
Moreover, across all benchmarks, the explicit equivariance of AB-GATr reliably outperforms implicit symmetry learning through data augmentation. 
Our findings showcase that equivariance is not universally beneficial across domains, yet it brings tangible advantages in problems lacking strong data regularities.
\end{abstract}

\section{Introduction}
In‑silico modelling of physical phenomena using numerical methods has long played a central role in science and engineering. 
In computational fluid dynamics (CFD), applications in domains such as automotive design~\citep{martins_aerodynamic_2022} and medicine~\citep{Williamson2022-eo} are primarily carried out using high‑fidelity numerical solvers that approximate the Navier-Stokes (NS) equations. While these methods provide accurate and physically grounded solutions, their computational cost and runtime remain bottlenecks in many real‑world settings.
For example, a single high‑fidelity automotive aerodynamics simulation can cost thousands of dollars and take hours or even days to complete~\citep{ashton2025drivaerml}, severely limiting large‑scale parametric studies and data‑driven design workflows. 
In medicine, rapid and reliable personalised assessment of hemodynamics, i.e, the way in which blood flows through arteries, can inform timely treatment decisions.

In recent years, neural surrogates - deep learning-based approximators of numerical methods - have emerged as a promising alternative, offering orders‑of‑magnitude acceleration by amortising simulation cost into offline training.
Once trained, such models can estimate quantities of interest, such as pressure, velocity, or wall shear stress, in seconds.
As a result, neural surrogates have sparked increasing interest across CFD‑driven domains, including automotive aerodynamics~\citep{wu2024transolver,luo2025transolver,zhdanov2025erwin,alkin2025abupt} or hemodynamics~\citep{brehmer2023geometric,suk2024lab,suk2024operator,rygiel2023,rygiel2025wss,griffo_geometric_2026,lannelongue_physics_2026}.

Recent progress in neural surrogate modelling has been largely shaped by two research directions. 
The first focuses on scalability, seeking architectures capable of handling high‑resolution, geometrically complex surface or volumetric domains without prohibitive memory or computational cost. 
This has led to more computationally efficient transformer designs~(\cite{luo2025transolver,zhdanov2025erwin}) and neural field formulations that decouple geometry encoding from pointwise prediction~\citep{Jaegle2021PerceiverIA,lu2021learning,li2023transformer,Cao2024-bn,alkin2024upt,alkin2025abupt}.
The second direction emphasises inductive bias by incorporating physical structure - such as conservation laws~\citep{raissi_physics-informed_2019,lutter2018deep,Suk2024PhysicsinformedGN,lannelongue_physics_2026} or symmetries~\citep{bronstein2021gdl} - into the model design to improve data efficiency and generalisation in domains that generally lack large training data sets. 
Among these, equivariant neural networks, which enforce symmetry constraints by construction, have shown strong empirical performance on a range of physics learning tasks~\citep{brandstetter2022clifford,brandstetter2022segnn,brehmer2023geometric,suk2024mesh,suk2024lab}.

Across scientific machine learning applications, the necessity of inductive bias is increasingly under scrutiny, particularly as training data set sizes increase in scale~\citep{brehmer2024doesequivariancematterscale}.
Recent notable works in molecular design have suggested that with sufficient data, neural networks can implicitly learn physical symmetries, rendering explicit equivariance unnecessary~\citep{abramson_accurate_2024,wang2024swallowing}.
In practice, however, this assumption does not hold uniformly across application domains and can lead to unreliable model behaviour~\citep{pmlr-v221-moskalev23a}.
Particularly in medical and engineering CFD problems, data can be varied, and training samples are scarce due to the substantial cost of acquisition and simulation~\citep{alkin2025abupt,rygiel2025activelearning}.
Moreover, the need for inductive biases may vary substantially across applications and datasets, depending on the degree of domain alignment. 
Automotive aerodynamics benchmarks~(\cite{shapenetcar,ashton2025drivaerml}) exhibit strong canonical alignment via standardized wind-tunnel setups and pronounced dataset-specific regularities (e.g., a single base geometry)~(\cite{ashton2025ahmedml}), while domains in hemodynamics simulations are patient-specific and might be arbitrarily oriented (Fig.~\ref{fig:main}).
While in the former class of applications, enforcing equivariance can be counterproductive as models lose the ability to exploit canonical alignment-induced cues and therefore attain suboptimal in-distribution performance~\citep{vadgama2025probingequivariancesymmetrybreaking,lawrence2026to}, relying solely on data to recover fundamental physical symmetries may be both inefficient and unreliable for the latter class.

\begin{figure}[t!]
    \centering
    \includegraphics[width=\textwidth]{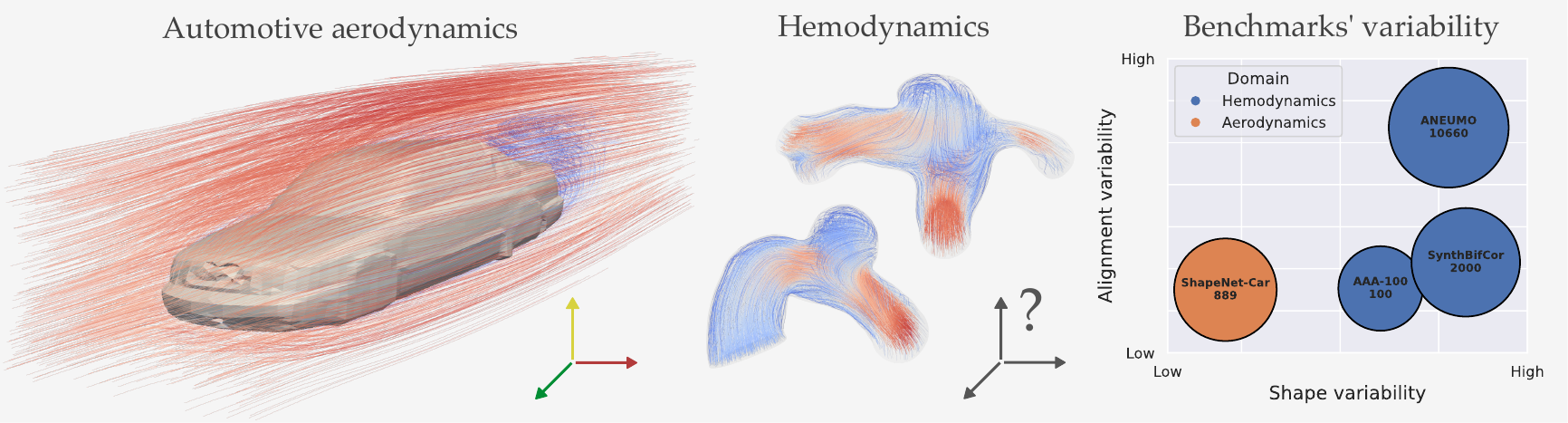}
    \caption{Variability of CFD benchmarks: Automotive aerodynamic benchmarks such as ShapeNet-Car exhibit near-perfect canonical alignment and low shape variability. While hemodynamic benchmarks (SynthBifCor, ANEUMO, AAA-100) have weaker alignment or even a lack thereof and high shape variability. Dataset sizes are indicated on the right-most plot by the marker size and numbers within.}
    \label{fig:main}
\end{figure}

In this work, we investigate when, how, and to what extent equivariance influences the performance of neural surrogates across realistic CFD benchmarks spanning varying degrees of alignment and geometrical variability.
To this end, we introduce the \textbf{Anchored-Branched Geometric Algebra Transformer (AB-GATr)}, an $E(3)$-equivariant neural surrogate that combines scalable anchor-attention and multi-branching~\citep{alkin2025abupt} with explicit $E(3)$-equivariance achieved through projective geometric algebra~\citep{brehmer2023geometric}.
Our main contributions are: (i) a systematic study of both explicit and implicit (data augmentation) equivariance across CFD benchmarks with varying degrees of alignment, shape variability and size; and (ii) AB-GATr, a neural surrogate that addresses both scalability and explicit symmetry-preservation in a unified architecture.

\section{Materials \& Methods}
This section introduces the CFD learning setup and makes the symmetry conditions required for equivariant surrogate modelling explicit. 
Building on this formulation, we present AB-GATr and the benchmarks used to evaluate the added value of equivariance.

\subsection{Computational Fluid Dynamics (CFD)}
CFD in automotive aerodynamics and hemodynamics requires solving the incompressible Navier-Stokes (NS) equations, partial differential equations (PDEs) that govern fluid dynamics by imposing momentum and mass conservation:

\begin{align*}
    \rho \left( \frac{\partial u}{\partial t} + u \cdot \nabla u \right) = \mu \nabla^2 u - \nabla p + g, \: \: \: \: \nabla \cdot u = 0
\end{align*}

Here, $u$ denotes the velocity field, $p$ the pressure field, $g$ external forces, and $\mu$ and $\rho$ fluid viscosity and density, respectively.

CFD solvers approximate the NS equations numerically over a discretised spatial domain $\Omega \subset \mathbb{R}^n$, typically using finite-element or finite-volume methods.
Boundary conditions $\mathcal{B}$ prescribed on the domain boundary $\partial \Omega$ determine the resulting solution.
The numerical solution is obtained in discretised form, with velocity, pressure, and derived quantities represented at elements within $\Omega$.

\subsection{Equivariant neural surrogates for CFD}
\label{sec:equivariance}
Neural surrogates provide a data‑driven alternative to classical numerical solvers by learning to approximate the mapping from the domain and boundary conditions to the PDE solution. 
We view a neural surrogate as approximating the mapping $f \in \mathcal{F}: (\mathcal{X}, \mathcal{B}) \mapsto \mathcal{Y}$.
Where $\mathcal{X}$ denotes the input signal (e.g., Cartesian coordinates or normal vectors) defined over the spatial domain $\Omega$, $\mathcal{B}$ represents boundary conditions, and $\mathcal{Y}$ the target solution.

Many PDEs exhibit intrinsic structure that neural surrogates can exploit through appropriate inductive biases imposed on the hypothesis class $\mathcal{F}$. 
These biases restrict $\mathcal{F}$ to functions consistent with known physical priors, thereby improving generalisation and data efficiency~\citep{bronstein2021gdl,brehmer2024doesequivariancematterscale}. 
A prominent example is symmetry, which is particularly relevant in CFD: assuming the absence of external forces, or that such forces respect the same symmetries as the domain, the incompressible 3D NS operator is equivariant to the Euclidean group $E(3)$.
Concretely, for any group action $\rho \in E(3)$, i.e., rotating, translating, or reflecting the spatial domain changes the solution in a well-defined way: $u(\rho x, t) = \rho u(x, t), \: p(\rho x, t) = p(x, t)$.
Incorporating such equivariances into neural surrogate models enables them to respect the underlying physics by construction.

However, an operator's equivariance does not automatically imply equivariance of the solution map. 
For a neural surrogate $f$ to be equivariant, the group action must jointly transform the input signal $\mathcal{X}$ and the boundary conditions $\mathcal{B}$:
$f\big(\rho\mathcal{X}, \rho\mathcal{B}\big)
= \rho f\big(\mathcal{X}, \mathcal{B}\big)$.
If either the input signal or the boundary conditions violate the assumed symmetry, equivariance is broken, and the NS solution no longer transforms consistently under $E(3)$.
This effect is amplified if we never explicitly supply boundary conditions to the model but require it to learn them as a function of the input signal $ f(\mathcal{X}, \mathcal{B}(\mathcal{X}))$.


This distinction highlights that symmetry is a property of the entire problem formulation, not of the governing equations alone. 
Consequently, the effectiveness of equivariant inductive biases depends strongly on both the problem and the data, echoing recent findings by \citet{lawrence2026to}.\footnote{As an example, we find that distributional symmetry breaking is present in ShapeNet-Car and not in Aneumo (see App~\ref{app:distributional})}

\subsection{Anchored-Branched Geometric Algebra Transformer (AB-GATr)}
\begin{figure}[t!]
    \centering
    \includegraphics[width=\textwidth]{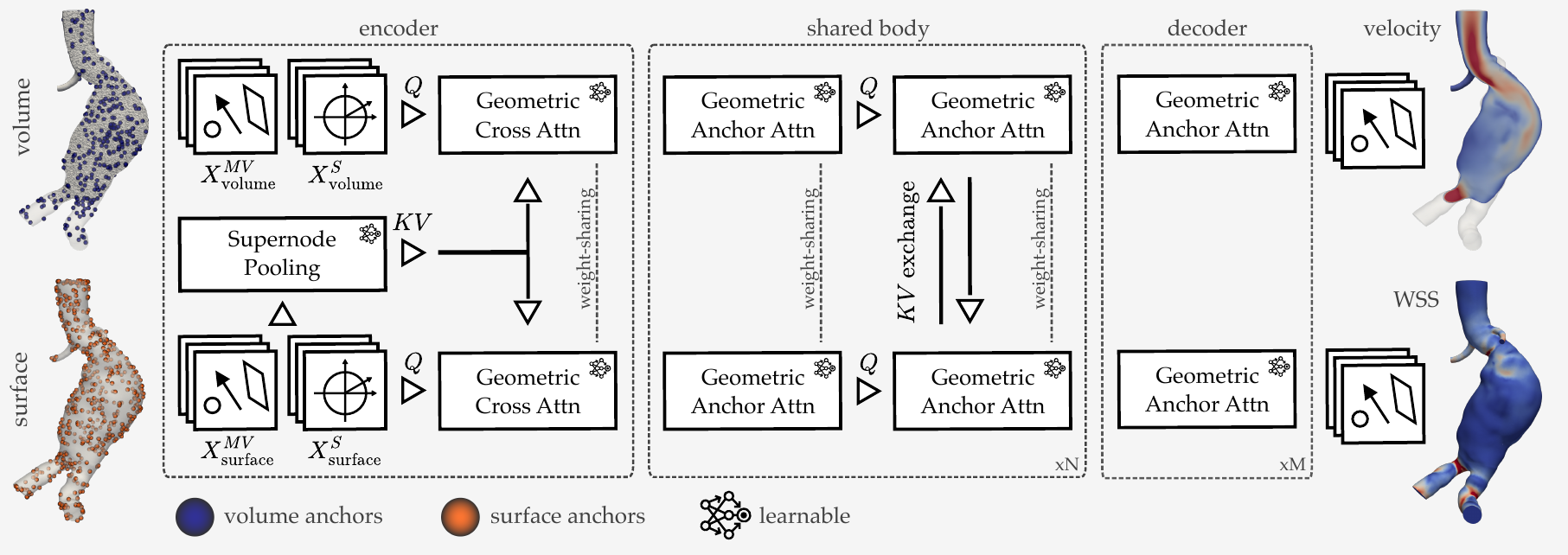}
    \caption{AB‑GATr is a transformer-based $E(3)$-equivariant neural field that jointly models volumetric and surface quantities. Volume and surface nodes are embedded using geometric algebra multivectors $X^{MV}$ alongside sine-cosine encoded scalar features $X^{S}$. Sparse surface and volume anchors are uniformly sampled and serve as keys and values for geometric anchor attention throughout the network. In the shared body, branch-specific geometric anchor attention is interleaved with a KV-exchange variant, in which key and value tokens are taken from opposite branches. Branch-specific decoders are applied at the output stage, where target quantities are extracted from the resulting multivectors.}
    \label{fig:abgatr}
\end{figure}
Designing neural surrogates for real‑world CFD involves simultaneously addressing two competing challenges: scalability to large, high‑resolution surface and volume meshes, and a physically meaningful inductive bias.
While existing approaches address one of these aspects at the expense of the other, this work aims to bridge these paradigms by proposing a scalable, equivariant architecture.

We introduce the \textbf{Anchored‑Branched Geometric Algebra Transformer (AB‑GATr)}~(Fig.~\ref{fig:abgatr}), an $E(3)$-equivariant neural surrogate built upon Anchored‑Branched Universal Physics Transformer (AB‑UPT), the current state‑of‑the‑art on automotive aerodynamics benchmarks~(\cite{alkin2025abupt}). 
AB‑GATr retains the key architectural components that enable AB‑UPT to scale to large CFD meshes, namely anchor-attention and multi‑branching, while incorporating equivariance and geometric inductive bias through projective geometric algebra (PGA) as introduced by~\cite{brehmer2023geometric}. 
The following sections describe the core components of the AB‑GATr architecture.

\subsubsubsection{\textbf{PGA \& positional encoding}}: Both AB‑UPT and AB‑GATr operate as transformer‑based neural fields that process mesh nodes as tokens. 
In AB‑UPT, geometric information, such as Cartesian coordinates or surface normals, is encoded in an unstructured manner by concatenating node‑level features into a single vector. 
This approach does not preserve geometric structure and is inherently non‑equivariant.
In contrast, AB‑GATr adopts PGA to provide a structured geometric representation, associating each token with a $16$‑dimensional multivector $X^{MV}$ capable of encoding geometric primitives such as points or directions~(see App~\ref{app:features}).
Operations on multivectors are designed to respect the actions of the Euclidean group $E(3)$, thereby ensuring equivariant processing of geometric information.
Similar to~\cite{brehmer2023geometric}, AB‑GATr additionally supports auxiliary scalar features $X^{S}$ for encoding non‑geometric attributes.

Transformer architectures typically rely on positional encodings to inject spatial context and enrich the input representation. 
AB‑UPT employs sine–cosine embeddings~\citep{vaswani2023} and rotary positional encoding (RoPE)~\citep{su_roformer_2024}, which cannot be directly applied to geometric algebra representations without breaking equivariance. 
To preserve $E(3)$-equivariance in AB‑GATr, positional encodings are therefore restricted only to the auxiliary scalars.
Empirically, this improves performance without compromising any equivariance guarantees.

\subsubsubsection{\textbf{Geometric anchor-attention}}: Self‑attention is a central component of transformer architectures, modelling all pairwise interactions among the input tokens $X$ of size $N$.
This results in an $O(N^2)$ computational complexity, which becomes prohibitively expensive for high-resolution computational meshes encountered in real-world scenarios, rendering full self‑attention impractical.

To address this scale challenge, \cite{alkin2025abupt} introduced the anchor-attention mechanism. 
Anchor-attention serves two primary purposes. 
Firstly, it reduces the complexity to $O(MN)$ by uniformly sampling a small subset of tokens $A \subset X$, with $|A| = M \ll N$, designated as \textit{anchors}. 
These anchors act as the keys and values in the attention mechanisms, while all tokens act as queries. 
Secondly, because non‑anchor tokens attend only to anchors, each non‑anchor token can be processed independently, satisfying the requirements of zero-shot super-resolution (a neural field property).

We extend the idea of anchor-attention to AB‑GATr by introducing \textit{geometric anchor-attention}, an $E(3)$-equivariant analogue.
Here, input tokens $X = (X^{MV}, X^{S})$, are represented by a tuple of multivectors $X^{MV}$ and corresponding auxiliary scalars $X^S$.
Attention weights of geometric attention are a linear combination of the following three elements: the invariant inner product of the geometric algebra $\langle \cdot, \cdot \rangle$, the distance-aware inner product of nonlinearities $\phi(\cdot), \psi(\cdot)$, and the standard Euclidean inner product of the auxiliary scalars with RoPE encoding $\text{R}$:

\begin{align*}
    \text{W}(k, q) := \text{Softmax} \left( \frac{\alpha \sum_{c}{\langle q^{MV}_{c}, k^{MV}_{c} \rangle} + \beta \sum_c{\phi (q^{MV}_{c}) \cdot \psi(q^{MV}_{c})} + \gamma \sum_c{\text{R}(q^{S}_{c})\text{R}(k^{S}_{c})}}{\sqrt{13n_{MV} + n_{S}}} \right)
\end{align*}

Where $n_{MV}, n_{S}$ are respectively the dimensionalities of multivectors and scalars, $c$ label channels, and $\alpha, \beta, \gamma > 0$ head-specific learnable weights (for more details, we refer to~\cite{brehmer2023geometric}).

Attention weights $W(\cdot, \cdot)$ can be utilised for both geometric self-attention and geometric anchor-attention in the following manner by employing equilinear projection layers $K(\cdot)$, $Q(\cdot)$, and $V(\cdot)$:

\begin{align*}
    \text{GeometricSelfAttention}(X) &:= W(K(X), Q(X)) V(X) \\
    \text{GeometricAnchorAttetion}(X, A) &:= W(K(A), Q(X)) V(A) \\ \\
    A = (X^{MV}[\mathcal{I}_A], X^S[\mathcal{I}_A]) \:\:\: \mathcal{I}_A \thicksim \: &\mathrm{Uniform}(\{1,2,...,N\}, M)
\end{align*}


\subsubsubsection{\textbf{Supernode pooling}}: To inject manifold boundary into the token representations, both volume and surface tokens attend to a set of surface‑pooled supernodes~(see Fig.~\ref{fig:abgatr})
The supernodes $S=A_{\text{surface}}$ correspond to the same surface indices sampled as anchors.
For each supernode  $a \in A_{\text{surface}}$ a local neighborhood is constructed via a radius‑ball: $\mathcal{N}_{r}(a) := \{b \in X_{\text{surface}} \: | \:\|a-b\|_2 \leq r\}$, over which message‑passing pooling is performed.
During pooling, message signals are augmented with the relative translation vector, encoded as a \textit{translation} primitive~(App~\ref{app:features}), and its magnitude as an auxiliary scalar, enabling boundary‑aware aggregation while preserving $E(3)$-equivariance.

\subsubsubsection{\textbf{Multi-branching}}: CFD problems often involve both volume- and surface-defined quantities.
To address this,~\cite{alkin2025abupt} introduced a multi‑branch design in which separate branches process volume and surface tokens while sharing a subset of weights and exchanging information through attention keys and values (KV exchange).
AB‑GATr adopts this design and extends it to the equivariant setting by jointly exchanging both multivectors $X^{MV}$ and auxiliary scalars $X^{S}$ across branches.
As illustrated in Fig.~\ref{fig:abgatr}, the \textit{encoder} processes volume and surface tokens with shared weights, followed by a \textit{shared body} in which standard geometric anchor-attention is interleaved with a KV exchange to enable cross‑branch interaction.
In the \textit{decoder} stage, the branches become fully independent, allowing for regression of volumetric and surface quantities.
This design enables AB‑GATr to model coupled surface-volume dynamics within a fully $E(3)$-equivariant framework.

\subsection{Neural surrogate CFD benchmarks}
\label{sec:benchmarks}

\begin{figure*}[t!]
    \centering
    \includegraphics[width=\textwidth]{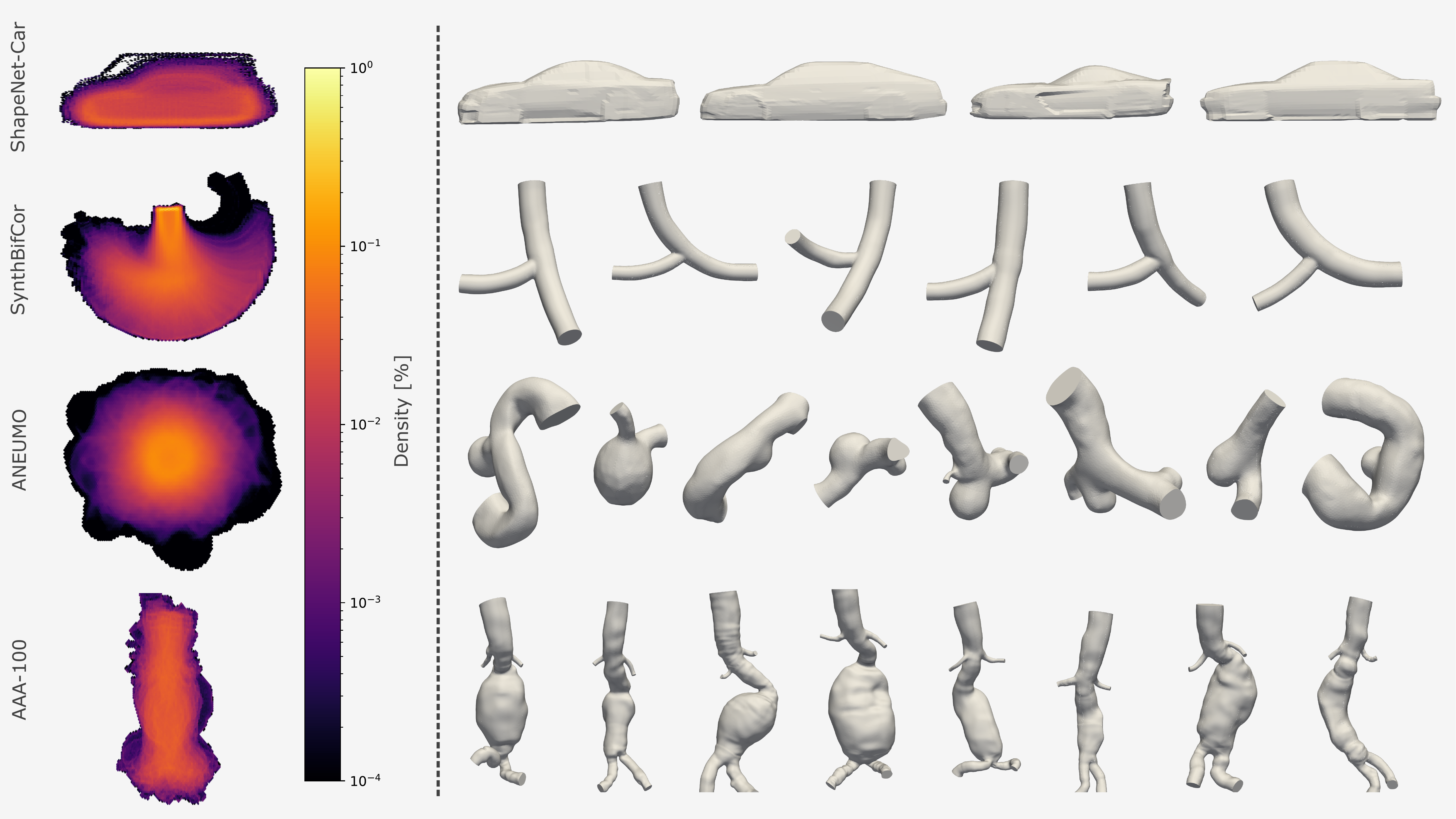}
    \caption{Visualisation of benchmarks' variability: the first column represents a histogram of point occupancy across the whole dataset. The second column shows random geometries sampled from the dataset kept in the same orientation as the histogram.}
    \label{fig:datasets-speheres}
\end{figure*}

As discussed in Section~\ref{sec:equivariance}, equivariance of the solution map arises only when input signal and boundary conditions jointly preserve symmetry.
Consequently, equivariant inductive biases might lead to different results in benchmarks spanning different symmetry regimes and geometric variations.
Such regimes naturally arise when comparing external- and internal-flow problems.
External flows, such as automotive aerodynamics, are defined within fixed global reference frames, which usually results in strong canonical alignment and explicit symmetry breaking imposed by the wind-tunnel setup. 
In contrast, internal flows, typical in hemodynamics, are predominantly intrinsic to geometry, with absolute orientation carrying less or no physical meaning at all. 
These distinctions make external and internal flows complementary benchmarks for assessing the benefits of equivariance.

\textbf{ShapeNet‑Car}~\citep{shapenetcar} is a benchmark for neural surrogate modelling in automotive aerodynamics. 
It comprises 3D car geometries extracted from ShapeNet~(\cite{chang2015shapenet}), with steady incompressible Navier–Stokes simulations performed under a fixed inflow corresponding to a speed of $72$ km/h.
All geometries are canonically aligned, reflecting wind‑tunnel experimental setups with a fixed reference frame~(see Fig.~\ref{fig:datasets-speheres}). 

\textbf{SynthBifCor}~\citep{suk2024mesh} consists of $2,000$ synthetic 3D left coronary bifurcation geometries derived from anatomical atlas measurements.
Steady incompressible CFD simulations were performed with a fixed inflow velocity of $11.8$ cm/s, yielding laminar flows at a Reynolds number $100$.
We follow the $1600/200/200$ train-validation-test split by~\cite{suk2024mesh}.

\textbf{ANEUMO}~\citep{li2025aneumo} contains $10,660$ synthetic intracranial aneurysm geometries generated from $400$ patient‑specific anatomies~(\cite{Juchler2022-ro}) and simulated under steady inflow conditions with mass flow rates ranging from $0.0010$ to $0.0040$ kg/s. 
In this work, to focus solely on geometric variability, we select a subset of the ANEUMO dataset with a mass flow rate of $0.0010$, yielding a single simulation per geometry.
Our final data split follows $8495/1072/1093$ train-validation-test split stratified according to the underlying base anatomies.

\textbf{AAA-100}~(\cite{rygiel2024aaa100,rygiel2025wss}) is a real‑world dataset of limited size comprising of $100$ patient‑specific abdominal aortic aneurysm (AAA) geometries reconstructed from computed tomography (CT) scans.
Transient incompressible CFD simulations were performed over a cardiac cycle with a prescribed template waveform with an inflow peak between $60$ and $140$ ml/s at the inlet.
Again, to isolate geometric variability, boundary conditions are fixed across patients with an inflow waveform peak of $80$ ml/s, and $10$‑fold cross‑validation is employed.

Fig.~\ref{fig:datasets-speheres} visualizes variability across all the benchmarks through per-dataset spatial distributions and randomly sampled representative geometries.
ShapeNet-Car exhibits perfect canonical alignment, with all samples following the same wind-tunnel setup.
SynthBifCor and AAA-100 come with weaker canonical alignment, by having the vessel inlets aligned, while the ANEUMO dataset lacks any alignment, as conveyed its spatial distribution closely resembling a normal distribution.
Due to being synthetic benchmarks, ShapeNet-Car and SynthBifCor offer limited shape variation compared to realistic, clinically acquired shapes included in ANEUMO and AAA-100.
Geometries in all the benchmarks contain between $29$k and $592$k vertices~(more details are provided in App.~\ref{app:dataset}).

\section{Experiments \& Results}
The following sections present the experiments evaluating the performance of AB-UPT and AB-GATr across aerodynamic and hemodynamic benchmarks.
To disentangle the effects of equivariance from architecture design, we include an additional $E(3)$-equivariant model, LaB-GATr~(\cite{suk2024lab}).
Like AB‑GATr, LaB‑GATr is a transformer operating in PGA, but instead of anchor-attention, it employs an encoder that operates on compressed geometry through farthest-point sampling tokenisation, and learned proportional interpolation for decoding. 
To thoroughly benchmark the notion of equivariance, we fix all models to follow the same architecture size, i.e., number of attention blocks, number of anchors/tokens, etc., with the only difference being the size of the hidden dimension and number of attention heads due to the difference between linear and geometric algebra layers~(see App.~\ref{app:training} for all the training details, and App.~\ref{app:resources} for time and memory complexities).

Across all the experiments, we refer to the original dataset as \textit{original} and to its rotated version as \textit{rotated}.
The rotated version of the dataset is constructed by applying random $SO(3)$ group actions, e.g., rotations of any degree in all axes.

\begin{figure}[t]
\centering
\begin{minipage}{0.54\linewidth}
  \captionof{table}[Short Heading]{We report \texttt{median} relative L2 error [\%] over $5$ runs. Best-performing model in bold, with an underline indicating whether its performance gain is statistically significant compared to the second-best (p < 0.05 in the Wilcoxon test) - does not apply to $^*$, since model performance is derived from~\cite{alkin2025abupt}.}
  \label{aerodynamics-table}
  \centering
  \begin{tabularx}{\textwidth}{XXXXX}
    \hline
        \toprule       
        & \multicolumn{4}{c}{\textbf{ShapeNet-Car}} \\
        
        & \multicolumn{2}{c}{\textit{original}} & \multicolumn{2}{c}{\textit{rotated}} \\

        \cmidrule(r){2-3}
        \cmidrule(r){4-5} 
        
        & \multicolumn{1}{c}{$\boldsymbol{p}$} & \multicolumn{1}{c}{$\boldsymbol{u}$} & \multicolumn{1}{c}{$\boldsymbol{p}$} &  \multicolumn{1}{c}{$\boldsymbol{u}$} \\

        \midrule
        PointNet$^*$ & \multicolumn{1}{c}{12.09} & \multicolumn{1}{c}{3.05} & \multicolumn{1}{c}{-} & \multicolumn{1}{c}{-}  \\
        GRAPH U-NET$^*$ & \multicolumn{1}{c}{10.33} & \multicolumn{1}{c}{2.49} & \multicolumn{1}{c}{-} & \multicolumn{1}{c}{-} \\
        GINO$^*$ & \multicolumn{1}{c}{13.28} & \multicolumn{1}{c}{2.53} & \multicolumn{1}{c}{-} & \multicolumn{1}{c}{-} \\
        LNO$^*$ & \multicolumn{1}{c}{9.05} & \multicolumn{1}{c}{2.29} & \multicolumn{1}{c}{-} & \multicolumn{1}{c}{-} \\
        UPT$^*$ & \multicolumn{1}{c}{6.41} & \multicolumn{1}{c}{1.49} & \multicolumn{1}{c}{-} & \multicolumn{1}{c}{-} \\
        OFormer$^*$ & \multicolumn{1}{c}{7.05} & \multicolumn{1}{c}{1.61} & \multicolumn{1}{c}{-} & \multicolumn{1}{c}{-} \\
        Transolver$^*$ & \multicolumn{1}{c}{6.46} & \multicolumn{1}{c}{1.62} & \multicolumn{1}{c}{-} & \multicolumn{1}{c}{-} \\
        Transformer$^*$ & \multicolumn{1}{c}{4.86} & \multicolumn{1}{c}{1.17} & \multicolumn{1}{c}{-} & \multicolumn{1}{c}{-} \\
        AB-UPT$^*$ & \multicolumn{1}{c}{\textbf{4.81}} & \multicolumn{1}{c}{\textbf{1.16}} & \multicolumn{1}{c}{-} & \multicolumn{1}{c}{-} \\

        \midrule
        AB-UPT$^\dagger$ & \multicolumn{1}{c}{5.30} & \multicolumn{1}{c}{1.31}  & \multicolumn{1}{c}{103.3} & \multicolumn{1}{c}{146.8} \\
        AB-UPT$^{\dagger,\mathrm{rot}}$ & \multicolumn{1}{c}{6.23} & \multicolumn{1}{c}{2.38} & \multicolumn{1}{c}{\textbf{6.22}} & \multicolumn{1}{c}{2.35} \\

        \midrule
        AB-GATr & \multicolumn{1}{c}{6.31} & \multicolumn{1}{c}{2.19} & \multicolumn{1}{c}{6.31} & \multicolumn{1}{c}{\underline{\textbf{2.19}}} \\

        \bottomrule
        \multicolumn{5}{l}{\small $*$ - results from~\cite{alkin2025abupt}} \\
        \multicolumn{5}{l}{\small $\dagger$ - our reimplementation of AB-UPT~(App.~\ref{app:reproducing})} \\
        \multicolumn{5}{l}{\small $\mathrm{rot}$ - trained with SO(3) data augmentation}
  \end{tabularx}
\end{minipage}
\hfill
\begin{minipage}{0.44\linewidth}
    \centering
    \includegraphics[width=\linewidth]{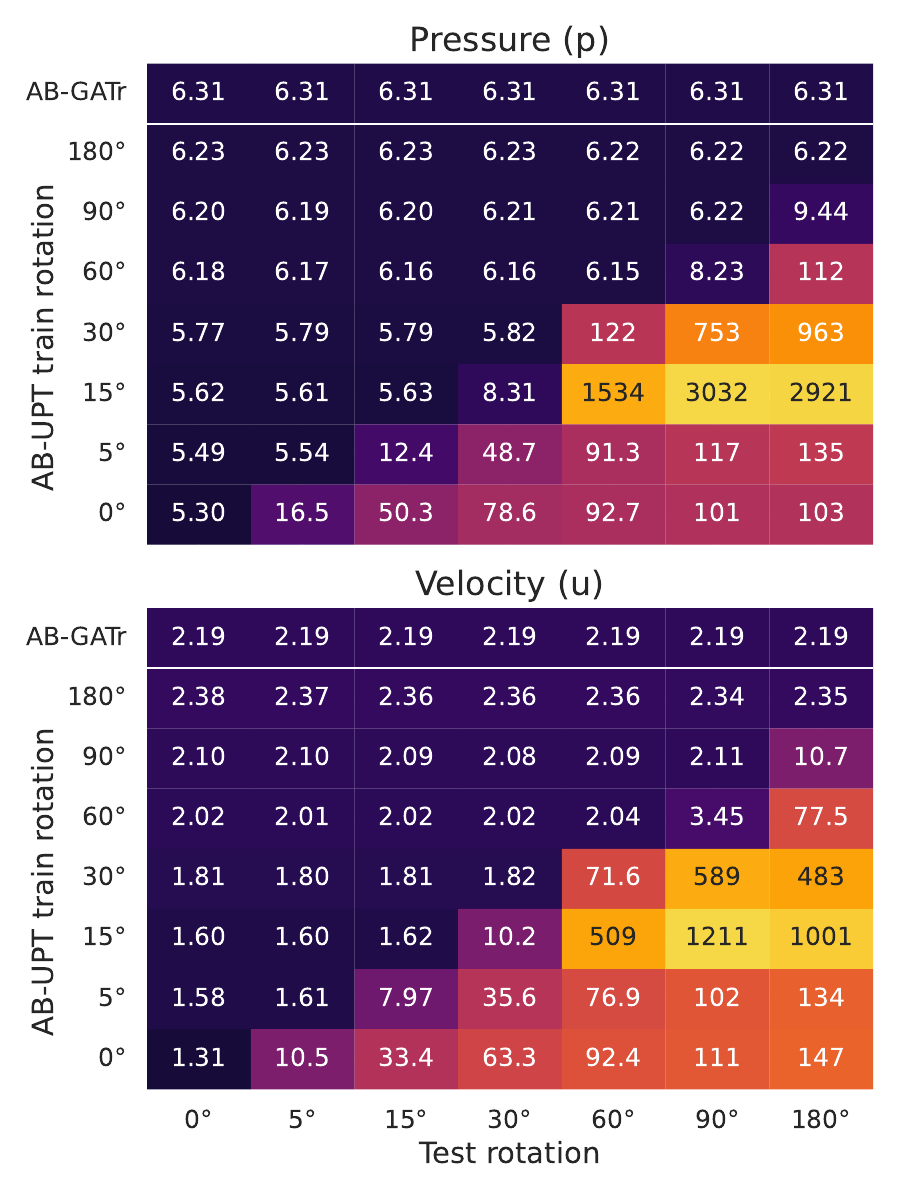}
    \caption{Median relative L2 error [\%] across $5$ runs on ShapeNet-Car for different training and testing rotation rates. Rotation rate $x^{\circ}$ indicates that random rotations in all axes up to $x^{\circ}$ have been used.}
    \label{fig:heatmap}
\end{minipage}
\end{figure}
\subsubsubsection{\textbf{Automotive aerodynamics}}: We evaluate AB‑GATr on the automotive aerodynamics benchmark using the ShapeNet‑Car dataset.
Table~\ref{aerodynamics-table} reports the relative L2 error for pressure ($p$) and velocity ($u$) prediction.
On the \textit{original}, canonically aligned dataset, AB‑GATr is outperformed by the Transformer baseline and both AB‑UPT implementations.
However, once canonical alignment is no longer present (\textit{rotated}), AB‑UPT fails entirely, whereas AB‑GATr retains its performance without degradation.
This behaviour is expected: AB‑UPT is trained exclusively on canonically aligned shapes, and evaluation under arbitrary rotations constitutes a severe out‑of‑distribution shift.
While robustness to the lack of canonical alignment can be learned through explicit $SO(3)$ data augmentation (AB‑UPT$^{\text{rot}}$), this comes at a non‑trivial cost.
Although augmented training equalises performance between the original and rotated test sets, it significantly degrades performance on the original alignment, falling below AB-GATr performance in velocity estimation.

To further characterise this trade‑off, we construct a toy experiment by systematically varying the degree of canonical alignment through rotational augmentation during training and evaluation.
Specifically, we train and evaluate AB‑UPT using rotations sampled up to $[5^{\circ},15^{\circ},30^{\circ},60^{\circ},90^{\circ},180^{\circ}]$. 
Figure~\ref{fig:heatmap} summarises the results, revealing two trends. 
First, performance on the originally aligned ($0^{\circ}$) test set consistently deteriorates as training data becomes less canonically aligned.
Second, regardless of the rotation range observed during training, evaluation at larger rotation magnitudes leads to a pronounced drop in accuracy, even for relatively small angular discrepancies (e.g., $5^\circ$). 
These trends are consistent for both scalar ($p$) and vector ($u$) valued targets.
However, the performance drop on $u$ is substantially larger than on $p$ in relative terms, indicating that the equivariant task is easier with canonical alignment than without it.
Together, these results indicate that enforcing or learning symmetry-preservation in problems with strong canonical alignment can deteriorate in‑distribution performance, while ensuring reliable generalisation when evaluated under unseen alignments.

\begin{table}
  \caption[Short Heading]{We report \texttt{median} of relative L2 error [\%] over $3$ runs for SynthBifCor and ANEUMO, and over $10$-fold cross-validation for AAA-100. Best-performing model in bold, with an underline indicating whether its performance gain is statistically significant compared to the second-best ($p < 0.05$ in the Wilcoxon test).}
  \label{hemodynamics-table}
  \centering
  \begin{tabular}{lllllllllll}
    \hline
        \toprule 
        & \multicolumn{4}{c}{\textbf{SynthBifCor}} & \multicolumn{2}{c}{\textbf{ANEUMO}} & \multicolumn{4}{c}{\textbf{AAA-100}} \\
        
        & \multicolumn{2}{c}{\textit{original}} & \multicolumn{2}{c}{\textit{rotated}} & \multicolumn{1}{c}{\textit{original}} & \multicolumn{1}{c}{\textit{rotated}} &\multicolumn{2}{c}{\textit{original}} & \multicolumn{2}{c}{\textit{rotated}} \\

        \cmidrule(r){2-3} 
        \cmidrule(r){4-5}
        \cmidrule(r){6-6}
        \cmidrule(r){7-7}
        \cmidrule(r){8-9}
        \cmidrule(r){10-11}
        
        & \multicolumn{1}{c}{$\boldsymbol{u}$} & \multicolumn{1}{c}{$\boldsymbol{\tau}$} & 
        \multicolumn{1}{c}{$\boldsymbol{u}$} & \multicolumn{1}{c}{$\boldsymbol{\tau}$} &  \multicolumn{1}{c}{$\boldsymbol{u}$} & 
        \multicolumn{1}{c}{$\boldsymbol{u}$} & \multicolumn{1}{c}{$\boldsymbol{u}$} & \multicolumn{1}{c}{$\boldsymbol{\tau}$} &
        \multicolumn{1}{c}{$\boldsymbol{u}$} & \multicolumn{1}{c}{$\boldsymbol{\tau}$} \\

        \midrule
        
        AB-UPT & \multicolumn{1}{c}{2.02} & \multicolumn{1}{c}{8.27} & \multicolumn{1}{c}{96.7} & \multicolumn{1}{c}{92.0} & \multicolumn{1}{c}{25.5} & \multicolumn{1}{c}{26.1} & \multicolumn{1}{c}{57.5} & \multicolumn{1}{c}{71.5} & \multicolumn{1}{c}{109} & \multicolumn{1}{c}{108}\\

        AB-UPT$^{\: \text{rot}}$ & \multicolumn{1}{c}{1.37} & \multicolumn{1}{c}{7.46} & \multicolumn{1}{c}{1.35} & \multicolumn{1}{c}{7.47} & \multicolumn{1}{c}{13.0} & \multicolumn{1}{c}{13.0} & \multicolumn{1}{c}{46.1} & \multicolumn{1}{c}{44.6} & \multicolumn{1}{c}{45.3} & \multicolumn{1}{c}{44.6} \\

        \midrule

        LaB-GATr$^\dagger$ & \multicolumn{1}{c}{1.21} & \multicolumn{1}{c}{7.46} & \multicolumn{1}{c}{1.21} & \multicolumn{1}{c}{7.46} & \multicolumn{1}{c}{\underline{\textbf{10.4}}} & \multicolumn{1}{c}{\underline{\textbf{10.4}}} & \multicolumn{1}{c}{42.0} & \multicolumn{1}{c}{47.3} & \multicolumn{1}{c}{42.0} & \multicolumn{1}{c}{47.3} \\
        
        AB-GATr & \multicolumn{1}{c}{\underline{\textbf{0.98}}} & \multicolumn{1}{c}{\underline{\textbf{6.87}}} & \multicolumn{1}{c}{\underline{\textbf{0.98}}} & \multicolumn{1}{c}{\underline{\textbf{6.87}}} & \multicolumn{1}{c}{10.9} & \multicolumn{1}{c}{10.9} & \multicolumn{1}{c}{\underline{\textbf{39.0}}} & \multicolumn{1}{c}{\underline{\textbf{38.4}}} & \multicolumn{1}{c}{\underline{\textbf{39.0}}} & \multicolumn{1}{c}{\underline{\textbf{38.4}}} \\

        \bottomrule
        \multicolumn{11}{l}{\small $\dagger$ - trained seperately for surface and volume} \\
        \multicolumn{11}{l}{\small $\mathrm{rot}$ - model trained with SO(3) data augmentation}
  \end{tabular}
\end{table}

\subsubsubsection{\textbf{Hemodynamic benchmarks}}: Moving onto benchmarks that naturally exhibit weaker canonical alignment, Table~\ref{hemodynamics-table} reports relative L2 errors for wall shear stress ($\tau$) and velocity ($u$) estimation across three hemodynamic datasets.
For benchmarks exhibiting weak canonical alignment (see Section~\ref{sec:benchmarks}), namely SynthBifCor and AAA‑100, performance on \textit{rotated} dataset degrades substantially for AB-UPT trained only on original data alignment, mirroring trends previously observed on ShapeNet‑Car.
In contrast, for the ANEUMO benchmark, which lacks any canonical alignment, performance on oriented and rotated datasets remains comparable.
Unlike ShapeNet-Car, we find that incorporating $SO(3)$ data augmentation consistently improves AB‑UPT performance across all hemodynamic benchmarks, even on the original alignment. 
Nevertheless, even with $SO(3)$ augmentation, AB‑UPT fails to close the performance gap to AB‑GATr on any benchmark.
To study the degree of performance gains attributed to the inclusion of geometric algebra and architecture design, we additionally compare with LaB-GATr.
For SynthBifCor and AAA-100, we observe that LaB-GATr's performance fares in between AB-UPT$^{\text{\:rot}}$ and AB-GATr, with slightly worse performance on $\tau$ estimation in AAA-100.
For ANEUMO, LaB-GATr slightly outperforms AB-GATr by $.5$ percentage point in rel. L2 error.

\begin{figure}[t!]
    \centering
    \includegraphics[width=\textwidth]{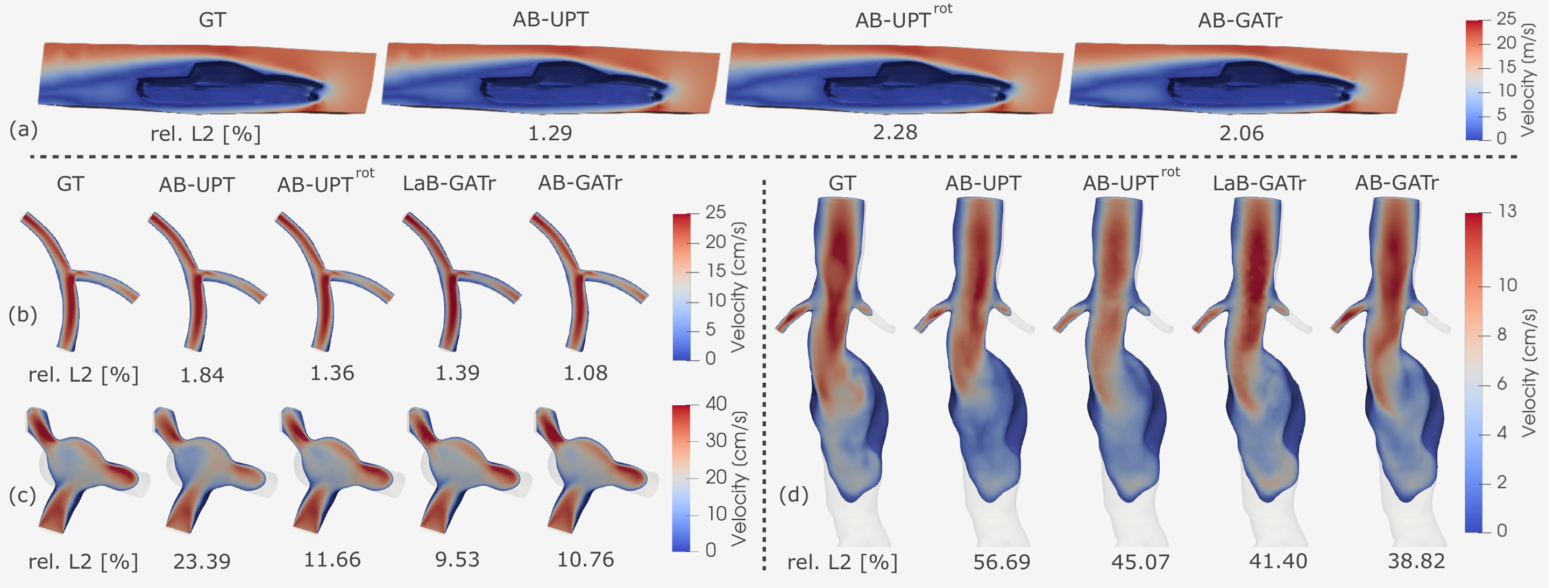}
    \caption{Qualitative comparison of neural surrogate predictions across CFD benchmarks: (a) ShapeNet-Car, (b) SynthBifCor, (c) ANEUMO, and (d) AAA-100. For each benchmark, a representative test case with performance close to the mean reported in Tables~\ref{aerodynamics-table} and~\ref{hemodynamics-table} is shown. In case (d), we visualise the time-averaged velocity field, since the AAA-100 benchmark consists of transient simulations with multiple time points.}
    \label{fig:visuals}
\end{figure}
\subsubsubsection{\textbf{Qualitative analysis}}:
Fig.~\ref{fig:visuals} presents a qualitative comparison of representative predictions across all benchmarks. The selected samples are chosen to be near the mean model performance reported in Tables~\ref{aerodynamics-table} and~\ref{hemodynamics-table}.
For ShapeNet-Car (a) and SynthBifCor (b), differences between the CFD reference (GT) and neural surrogate predictions are largely visually indistinguishable, consistent with the low mean relative L2 errors (below $2\%$) observed for all models. 
Both benchmarks consist of synthetically generated geometries and exhibit limited variability in their CFD solutions across samples, which simplifies the prediction task.
In contrast, the ANEUMO (c) and AAA-100 (d) benchmarks introduce substantially higher realism, reflecting conditions commonly encountered in practice by incorporating patient-specific geometries (or realistic derivatives thereof in the case of ANEUMO). 
This increased difficulty is reflected both quantitatively, by relative L2 errors that are an order of magnitude larger, and qualitatively, through clearly visible discrepancies in the predicted flow fields.
On ANEUMO, AB-UPT exhibits the weakest performance, with errors becoming more pronounced further from the inlet, where velocity magnitudes are underestimated, and characteristic flow patterns are insufficiently captured. 
A similar behaviour is observed on AAA-100, where side branches and the aneurysm sac lack finer and high-frequency flow details present in the CFD solution.
Across both benchmarks, equivariant models consistently produce more faithful reconstructions of the flow fields. 
In particular, AB-GATr yields the smoothest and most coherent predictions, with improvements most evident in the leftmost branch in example (c) and throughout the entire geometry in example (d).

\section{Discussion}
Dataset scale has often been cited as a key factor when assessing the benefits of equivariance~\citep{brehmer2024doesequivariancematterscale}. 
While scale is important, our results suggest that it is not sufficient on its own to determine whether equivariant or non‑equivariant models are preferable. 
In particular, the ANEUMO benchmark provides an unusually large hemodynamics dataset ($\sim$10k simulations), yet explicit $E(3)$‑equivariance remains beneficial, indicating that increased data alone does not reliably replace symmetry‑aware structure. 
Instead, we find that the degree of distributional alignment induced by the problem formulation plays a comparably important role. 
In strongly aligned settings such as ShapeNet‑Car, non‑equivariant models can exploit alignment‑specific regularities to improve in‑distribution performance, an effect also noted by~\cite{lawrence2026to,vadgama2025probingequivariancesymmetrybreaking}. 
By contrast, in weakly or inconsistently aligned regimes, as in hemodynamic benchmarks, equivariant models consistently outperform non‑equivariant ones - even with data augmentation - across dataset sizes and levels of realism.
Although equivariant methods have often struggled to scale due to computational overhead, our work and recent architectures demonstrate that symmetry guarantees can be integrated into modern, efficient neural designs, enabling practical deployment without sacrificing scalability~\citep{islam2026platonic,qu2024the,suk2024lab}.

\subsubsubsection{\textbf{Limitations and future work}}: 
This work primarily focuses on geometric variability and assumes fixed boundary conditions. 
In practice, robustness to different boundary conditions is equally important. 
Moreover, our analysis is restricted to a single architectural paradigm (anchored‑branched neural fields) and a single notion of equivariance (PGA). As noted by~\cite{brehmer2024doesequivariancematterscale}, the performance gap between equivariant and non‑equivariant models may depend on available compute. 
While our choices reflect some of the most scalable and performant frameworks currently available, broader comparisons across architectures and equivariance formulations remain an important direction for future work. 
Finally, our findings are empirical and do not yet provide a principled criterion for determining when equivariance is beneficial - a key step toward guiding model design in real-world applications, which recently has been explored by~\cite{lawrence2026to}.

\subsubsubsection{\textbf{Conclusions}}: We introduced AB-GATr, a scalable $E(3)$-equivariant neural surrogate, and studied the benefits of symmetry-preservation in practical automotive aerodynamic and hemodynamic benchmarks.
Our findings indicate that equivariance is not universally beneficial across different CFD domains, yet its usefulness is highly relevant in problems with weak data regularities.

\section{Acknowledgements}
This work made use of the Dutch
national e-infrastructure, in particular the Dutch supercomputer Snellius, with the support of the SURF Small Compute
Applications grant no. EINF-16639. 
This project has received funding from the European Union’s Horizon Europe research and innovation programme under grant agreement
No 101080947 (VASCUL-AID).

\small
\bibliographystyle{cas-model2-names}
\bibliography{neurips_2026}

\appendix
\section{Implementation details}
\label{app:reproducing}

\subsection{Position scaling}

In all experiments, the dataset Cartesian coordinates are scaled to fit in $[0, 1000]$ cube, following~\cite{alkin2025abupt}.
For ShapeNet-Car, however, we do not utilise the same normalisation constants as~\cite{alkin2025abupt}, since they fit the geometries too tightly, causing positions to go negative upon rotations, which is not compatible with sine-cosine embeddings that expect all positions to be positive.
Hence, we recalculate all normalisation constants anew, to fit any geometry rotation, and utilise those for both \textit{original} and \textit{rotated} benchmarks.

\subsection{Label standardisation}

Standardisation is a common technique for preparing labels for training deep learning models.
In this work, we address either scalar (pressure) or vector (velocity, wall shear stress) prediction.
AB-UPT utilises per-axis standardisation to mean of $0$ and standard deviation of $1$ for both scalars and vectors~(\cite{alkin2025abupt}).
However, such a standardisation technique cannot be used for AB-GATr or LaB-GATr since it breaks equivariance.

In all the benchmarks we study, scalars are invariant and vectors equivariant with respect to $SO(3)$.
According to Theorem~\ref{thm:standarisation}, the only construction of vector standardisation that respects $SO(3)$-equivariance is when the mean is $0$, and the standard deviation is constant across the axes (in other words, the direction of the vectors \textit{does not} change under standardisation).
As such, to retain $SO(3)$-equivariance, we standardise vector labels for AB-GATr by dividing by the vector magnitude mean (i.e. divide by a single scalar).
For AB-UPT, we follow per-axis standardisation to a mean of $0$ and standard deviation of $1$, which yielded empirically better results than magnitude standardisation only in our experiments.

\newtheorem{theorem}{Theorem}[section]
\begin{theorem}\label{thm:standarisation}
Let $f: \mathbb{R}^c \mapsto \mathbb{R}^3$ be an $SO(3)$-equivariant function, i.e., $(\forall \rho \in SO(3)) (f(\rho x) = \rho f(x))$, where $x \in \mathbb{R}^c$.
Let's consider the following function $g(x) = \sigma \odot f(x) + \mu$, which represents standardisation inverse with mean $\mu \in \mathbb{R}^3$ and standard deviation $\sigma \in \mathbb{R}^3$.
A function $g(x)$ is $SO(3)$-equivariant only if $\mu = 0$ and $\sigma_1 = \sigma_2 = \sigma_3$
\end{theorem}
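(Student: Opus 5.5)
The plan is to write the standardisation inverse as $g(x) = D f(x) + \mu$ with $D = \mathrm{diag}(\sigma_1,\sigma_2,\sigma_3)$, and to turn equivariance of $g$ into an identity about the matrix $D$ and the vector $\mu$. Assume $g(\rho x) = \rho g(x)$ for all $\rho \in SO(3)$ and all $x$. Using $f(\rho x) = \rho f(x)$, this is equivalent to
\begin{equation*}
(D\rho - \rho D)\, f(x) = (\rho - I)\,\mu \qquad \text{for all } \rho \in SO(3),\ x \in \mathbb{R}^c .
\end{equation*}

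\textbf{Step 1: forcing $\mu = 0$.} I will first exploit the structure of the image of $f$. If $v = f(x)$, then $\rho' v = f(\rho' x)$ is also in the image for every $\rho' \in SO(3)$. So the left-hand side may be evaluated at $\rho' v$ while the right-hand side stays fixed:
\begin{equation*}
(D\rho - \rho D)\,\rho' v = (\rho - I)\,\mu \qquad \text{for all } \rho' .
\end{equation*}
Next I integrate over $\rho'$ with respect to the normalised Haar measure on $SO(3)$. The average of $\rho' v$ is $0$, since it is a rotation-invariant vector. The left side therefore averages to $0$, which gives $(\rho - I)\mu = 0$ for every rotation $\rho$. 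The only vector fixed by all of $SO(3)$ is $0$, so $\mu = 0$.

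If the input action is linear, there is a simpler route: $f(0) = \rho f(0)$ forces $f(0) = 0$, and then evaluating the identity at $x = 0$ gives $\mu = \rho\mu$ directly. I will mention this but rely on the averaging argument, since the statement does not specify the action on $\mathbb{R}^c$.

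\textbf{Step 2: forcing $\sigma_1=\sigma_2=\sigma_3$.} With $\mu = 0$, the identity reads $(D\rho - \rho D) w = 0$ for every $w$ in the image of $f$. Here a nondegeneracy assumption is genuinely needed. If $f \equiv 0$, then $g \equiv \mu$, and with $\mu = 0$ any $\sigma$ works. So I will state explicitly that $f$ is not identically zero; this is the implicit hypothesis of the theorem.

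Take $v = f(x) \neq 0$. The image then contains the whole sphere of radius $\|v\|$, which spans $\mathbb{R}^3$. Hence $D\rho = \rho D$ for all $\rho \in SO(3)$. This commutation is then tested on specific rotations. Let $\rho$ be the rotation by $90^\circ$ about the $z$-axis, so $e_1 \mapsto e_2$ and $e_2 \mapsto -e_1$. Applying $D\rho = \rho D$ to $e_1$ gives $\sigma_2 e_2 = \sigma_1 e_2$, so $\sigma_1 = \sigma_2$. The analogous rotation about the $x$-axis gives $\sigma_2 = \sigma_3$. Alternatively, Schur's lemma applies, since $\mathbb{R}^3$ is an irreducible representation of $SO(3)$ and so $D$ must be a scalar multiple of the identity.

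\textbf{Main obstacle.} I expect the delicate part to be Step 1, because it must handle the coupling between $\mu$ and the unknown term $(D\rho - \rho D) f(x)$ without assuming anything about the input action. The Haar-averaging trick is my first choice there. The only other point needing care is making the $f \not\equiv 0$ hypothesis explicit.
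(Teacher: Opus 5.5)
Your proof is correct. It follows the same skeleton as the paper's: first force $\mu = 0$, then show $D\rho = \rho D$ for all $\rho \in SO(3)$, then conclude that $D$ is scalar. However, you justify each step differently, and more carefully.

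\textbf{Forcing $\mu = 0$.} The paper simply ``sets $f(x) = 0$''. This tacitly assumes that $f$ has a zero. That is true at $x = 0$ when the action on $\mathbb{R}^c$ is linear, since $f(0) = \rho f(0)$, but the paper never argues it. Your argument instead uses that the image of $f$ is closed under $SO(3)$, which holds for any input action, and then averages over the orbit. You could even skip Haar measure: $v$ and $-v$ both lie in the orbit, so $(D\rho - \rho D)v$ equals both $(\rho - I)\mu$ and its negative.

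\textbf{Commutation step.} The paper passes from $D\rho f(x) = \rho D f(x)$ to $D\rho = \rho D$ with no argument. Your remark that the orbit of a nonzero value is a full sphere, and hence spans $\mathbb{R}^3$, is exactly the missing justification. Your counterexample $f \equiv 0$ (where $g \equiv 0$ is equivariant for every $\sigma$) shows that the hypothesis $f \not\equiv 0$ is genuinely absent from the stated theorem.

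\textbf{Equal $\sigma_i$.} You replace the appeal to Schur's lemma with an explicit $90^\circ$-rotation computation. This is more elementary. It also sidesteps the fact that over $\mathbb{R}$, Schur's lemma in general only says the commutant is a division algebra. Here that gap is harmless, because $D$ is real diagonal and its eigenspaces would be invariant subspaces.
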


\begin{proof}
A function $g(x)$ is $SO(3)$-equivariant if following identity holds for every group action $\rho \in SO(3)$ and every input $x \in \mathbb{R}^c$:
\begin{gather*}
g(\rho x) = \rho g(x) \\ \\
g(Ax) = \sigma \odot f(\rho x) + \mu = \sigma \odot (\rho f(x)) + \mu \\
\rho g(x) = \rho(\sigma \odot f(x) + \mu) = \rho(\sigma \odot f(x)) + \rho\mu \\ \\
\sigma \odot (\rho f(x)) + \mu = \rho (\sigma \odot f(x)) + \rho \mu
\end{gather*}

First, let's set $f(x) = 0$.
Then, $(\forall A \in SO(3))(\mu = \rho \mu)$, which means that value of $\mu$ is invariant under $SO(3)$.
The only such vector is a zero vector.
As such for $g(x)$ to be $SO(3)$-equivariant, $\mu = 0$.

By substituting $\mu = 0$ we get that $\sigma \odot (\rho f(x)) = \rho(\sigma \odot f(x))$. 
By writing $\sigma$ as a diagonal matrix $D = diag(\sigma_1,\sigma_2,\sigma_3)$, we get that $g(x)$ is $SO(3)$-equivariant if $(\forall \rho \in SO(3)) (D\rho f(x) = \rho Df(x) \implies D \rho = \rho D)$.
Which means that $D$ and $A$ are commuting matrices.
However, according to Schur's lemma, only the commutant of $SO(3)$ group is of the form $cI$, where $c$ is any scalar.
As such, $\sigma_1 = \sigma_2 = \sigma_3$.

Consequently, $g(x)$ is $SO(3)$-equivariant only if $\mu = 0$ and $\sigma_1 = \sigma_2 = \sigma_3$.
\end{proof}

\subsection{Input features}
\label{app:features}
For ShapeNet-Car, we follow~\cite{alkin2025abupt} and use only Cartesian coordinates as input features for both surface and volumetric representations.

For the hemodynamic benchmarks, we adopt the input feature design proposed in~\cite{suk2024mesh, rygiel2025wss}, which has been shown to improve model expressiveness. Surface features include Cartesian coordinates, normal vectors, and surface geodesic distances to the inlet and outlets. Volumetric features consist of Cartesian coordinates and volume geodesic distances to the vessel wall, inlet, and outlets. For the AAA-100 dataset, we additionally incorporate principal curvatures, inflow rate, and flow prior features as described in~\cite{rygiel2025wss}.

AB-GATr and LaB-GATr employ projective geometric algebra $\mathbf{G}(3,0,1)$ to embed geometric quantities as $16$-dimensional multivectors, following Equation~\ref{eq:multivector} and Table~\ref{tab:GA_embedding}. In these models, vector-valued features (normal vectors and flow priors) are embedded as \emph{planes}, Cartesian coordinates as \emph{points}, and scalar features (geodesic distances, curvatures, and inflow rate) are provided as auxiliary scalars.

\begin{equation}\label{eq:multivector}
\begin{aligned}
    x \hspace{1pt} = \hspace{1pt} (x_s, \underbrace{x_0, x_1, x_2, x_3}_{\text{vectors}}, \underbrace{x_{01}, x_{02}, x_{03}, x_{12}, x_{13}, x_{23}}_{\text{bivectors}},
    \underbrace{x_{012}, x_{013}, x_{023}, x_{123}}_{\text{trivectors}}, x_{0123})
\end{aligned}
\end{equation}

\begin{table}[!ht]
    \centering
    \caption{Embedding of vectors (planes), points and scalars as $16$-dimensional multivectors in PGA (Eq. \eqref{eq:multivector}), the remaining elements of $x$ remain zero.}
    \label{tab:GA_embedding}
      \begin{tabular}{ll}
        \toprule 
        Geometric object / operation & PGA embedding \\

        \midrule
        Scalar $s \in \mathbb{R}$ & $x_s = s$ \\
        Plane with normal $\nu \in \mathbb{R}^3$, offset $\delta\in\mathbb{R}$ & $(x_0, x_1, x_2, x_3) = (\delta, \nu)$ \\
        Point $\rho \in \mathbb{R}^{3}$ & $(x_{012}, x_{013}, x_{023}, x_{123}) = (\rho, 1)$ \\ 
    
        \bottomrule
  \end{tabular}
\end{table}

\subsection{Training details}
\label{app:training}
To isolate the effect of equivariance, we fix architectural hyperparameters across all benchmarks and minimise differences unrelated to symmetry enforcement. 
Both AB-UPT and AB-GATr use a single geometry-branch block, followed by $9$ shared blocks alternating between standard and KV-exchange (geometric) anchor attention, and $2$ separate decoder blocks for the surface and volume branches - resulting in $12$ attention blocks per branch with $10$ sharing weights.
For LaB-GATr we utilise $12$ transformer blocks.

Due to differences between standard linear layers and geometric algebra layers, the two models employ different hidden feature dimensions. 
AB-UPT uses a hidden dimension of $192$ with $3$ attention heads, following the configuration of~\cite{alkin2025abupt}. 
In contrast, AB-GATr and LaB-GATr use $16$ multivector channels, $64$ scalar channels, and $4$ attention heads. 
For each benchmark, we tune the geometry-branch pooling radius and dropout rate, and apply the same values to all architectures (see Table~\ref{hyper-table}).
Total number of trainable parameters is $7$M, $1.6$M and $1.3$M for AB-UPT, AB-GATr and LaB-GATr, respectively.

All models are trained with a batch size of $1$, gradient clipping with a threshold of $1.0$, and the LION optimiser~(\cite{lion}) with weight decay $5\times10^{-2}$. 
We use a linear learning-rate warmup for $5\%$ of the training schedule, followed by cosine decay to a final learning rate of $10^{-6}$. 
The peak learning rate is tuned separately for each model and benchmark in the range $[10^{-5}, 10^{-3}]$ (see Table~\ref{hyper-table}). 
All models are trained in full \texttt{float32} precision, as we observed instability when using mixed \texttt{float16} precision for AB-GATr. Mean squared error is used as the training loss for both surface and volume quantities, with the final loss being a mean of both branch losses.

\begin{table}[!h]
  \caption[Short Heading]{Architecture hyperparameters used across all the experiments. "Anchors (V | S)" denotes used volume and surface anchors, respectively - in the case of LaB-GATr, the number of anchors refers to the number of used tokens which serve the same compression purpose.}
  \label{hyper-table}
  \centering
  \begin{tabular}{lllllll}
  \hline
        \toprule 
        \multicolumn{1}{c}{Benchmark} & \multicolumn{1}{c}{Anchors} & \multicolumn{1}{c}{Radius} & \multicolumn{1}{c}{Dropout} & \multicolumn{1}{c}{Model} & \multicolumn{1}{c}{Lr} & \multicolumn{1}{c}{Epochs} \\

        \midrule
        \multicolumn{1}{c}{\multirow{3}{*}{ShapeNet-Car}} & \multicolumn{1}{c}{\multirow{3}{*}{4096 | 3586}} & \multicolumn{1}{c}{\multirow{3}{*}{9}} & \multicolumn{1}{c}{\multirow{3}{*}{0\%}} & \multicolumn{1}{l}{AB-UPT} & \multicolumn{1}{c}{5e-5} & \multicolumn{1}{c}{500}\\

        & & & & \multicolumn{1}{l}{AB-UPT$^{\mathrm{rot}}$} & \multicolumn{1}{c}{5e-5} & \multicolumn{1}{c}{1000}\\
        & & & & \multicolumn{1}{l}{AB-GATr} & \multicolumn{1}{c}{5e-5} & \multicolumn{1}{c}{500} \\

        \midrule
        \multicolumn{1}{c}{\multirow{4}{*}{SynthBifCor}} & \multicolumn{1}{c}{\multirow{4}{*}{5000 | 5000}} & \multicolumn{1}{c}{\multirow{4}{*}{9}} & \multicolumn{1}{c}{\multirow{4}{*}{0\%}} & \multicolumn{1}{l}{AB-UPT} & \multicolumn{1}{c}{2e-5} & \multicolumn{1}{c}{1000} \\
        
        & & & & \multicolumn{1}{l}{AB-UPT$^{\mathrm{rot}}$} & \multicolumn{1}{c}{2e-5} & \multicolumn{1}{c}{1000} \\
        & & & & \multicolumn{1}{l}{AB-GATr} & \multicolumn{1}{c}{1e-4} & \multicolumn{1}{c}{300} \\
        & & & & \multicolumn{1}{l}{LaB-GATr} & \multicolumn{1}{c}{5e-5} & \multicolumn{1}{c}{300 | 200} \\

        \midrule
        \multicolumn{1}{c}{\multirow{4}{*}{ANEUMO}} & 
        \multicolumn{1}{c}{\multirow{4}{*}{5000 | 5000}} & \multicolumn{1}{c}{\multirow{4}{*}{1}} & \multicolumn{1}{c}{\multirow{4}{*}{0\%}} & \multicolumn{1}{l}{AB-UPT} & \multicolumn{1}{c}{5e-5} & \multicolumn{1}{c}{100} \\
        
        & & & & \multicolumn{1}{l}{AB-UPT$^{\mathrm{rot}}$} & \multicolumn{1}{c}{5e-5} & \multicolumn{1}{c}{100} \\
        & & & & \multicolumn{1}{l}{AB-GATr} & \multicolumn{1}{c}{1e-4} & \multicolumn{1}{c}{50} \\
        & & & & \multicolumn{1}{l}{LaB-GATr} & \multicolumn{1}{c}{5e-5} & \multicolumn{1}{c}{50} \\

        \midrule
        \multicolumn{1}{c}{\multirow{4}{*}{AAA-100}} & \multicolumn{1}{c}{\multirow{4}{*}{5000 | 5000}} & \multicolumn{1}{c}{\multirow{4}{*}{1}} & \multicolumn{1}{c}{\multirow{4}{*}{10\%}} & \multicolumn{1}{l}{AB-UPT} & \multicolumn{1}{c}{5e-5} & \multicolumn{1}{c}{500}\\
        
        & & & & \multicolumn{1}{l}{AB-UPT$^{\mathrm{rot}}$} & \multicolumn{1}{c}{5e-5} & \multicolumn{1}{c}{500}\\
        & & & & \multicolumn{1}{l}{AB-GATr} & \multicolumn{1}{c}{2e-4} & \multicolumn{1}{c}{1000} \\
        & & & & \multicolumn{1}{l}{LaB-GATr} & \multicolumn{1}{c}{2e-4} & \multicolumn{1}{c}{1000 | 1000} \\
        
        \bottomrule
        \multicolumn{7}{l}{\small $\mathrm{rot}$ - AB-UPT trained with rotations} \\
  \end{tabular}
\end{table}

\subsection{Scalability \& resources}
\label{app:resources}
By adopting components of AB-UPT, AB-GATr achieves similar scaling capabilities.
AB-GATr allows for the training of geometries of arbitrary size, due to its training memory complexity being dependent on the number of anchor tokens only.
During inference, the memory complexity depends on the number of query tokens as well, however, a chunked inference scheme (as described in~\cite{alkin2025abupt}) can be employed to split all query tokens into smaller subsets that fit in GPU memory.
This is a substantial gain over LaB-GATr, whose training memory complexity depends on the number of all tokens.
Moreover, LaB-GATr's initial subsampling employs a costly farthest-point sampling scheme that scales poorly with the number of tokens.

All the models were trained on a single NVIDIA H100 GPU, taking approximately following resources for SynthBifCor/ANEUMO/AAA-100: AB-UPT: 34h, 2.3GB / 17h, 1.9GB / 1h, 2.4GB; AB-GATr: 70h, 9.5GB / 51h, 6.5GB / 11h, 7.0GB; LaB-GATr (volume): 285h, 10.6GB / 226h, 9.7GB / 110h, 20GB; LaB-GATr (surface): 39h, 3.9GB / - / 14h, 4.9GB.

\section{CFD benchmarks}
\label{app:dataset}

\subsection{Variability measures}
Fig~\ref{fig:main} represents variability measures of benchmarks across alignment and shape.

\subsubsubsection{\textbf{Alignment variability}} measure aims at quantifying the variability of alignment distribution within each dataset $\mathcal{X}$.
The alignment $A$ of a single data point $X_i \in \mathcal{X}$ is quantified based on its velocity field $u_{X_i}$ by taking magnitude-weighted dominant directions $R$ of its von Mises-Fisher distribution:

\begin{align*}
    R(X_i) := \sum_{x \in X_i} u_{X_i}(x), \quad A(X_i) := R(X_i) / ||R(X_i)||_2
\end{align*}

The dataset alignment variability $\bar{A}(\mathcal{X})$ is calculated through the average cosine similarity across the dataset to mean dataset alignment in the following manner:

\begin{align*}
    \bar{A}(\mathcal{X}) := 1 - \frac{1}{|\mathcal{X}|} \sum_{X_i \in \mathcal{X}} \text{cos}\angle(A(X_i), \frac{1}{|\mathcal{X}|} \sum_{X_j \in \mathcal{X}} A(X_j)) 
\end{align*}

\subsubsubsection{\textbf{Shape variability}} measure aims at quantifying the variability in intrinsic shape without taking into account its orientation.
To achieve that, for each geometry $X_i \in \mathcal{X}$, we extract invariant shape descriptor $S$ through spectral decomposition of Laplace-Beltrami operator $\Delta_{X_i}$:

\begin{align*}
    S(X_i) := (\lambda_0, \lambda_1, ..., \lambda_k), \quad -\Delta_{X_i}\phi_n := \lambda_n \phi_n, \quad n = 0,1,...
\end{align*}

The dataset shape variability $\bar{S}(\mathcal{X})$, as for $\bar{A}(\mathcal{X})$, is calculated through the average cosine similarity across the dataset to the mean dataset shape descriptor:

\begin{align*}
    \bar{S}(\mathcal{X}) := 1 - \frac{1}{|\mathcal{X}|} \sum_{X_i \in \mathcal{X}} \text{cos}\angle(S(X_i), \frac{1}{|\mathcal{X}|} \sum_{X_j \in \mathcal{X}} S(X_j)) 
\end{align*}

We have looked into the error explainability of equivariant and non-equivariant models through these measures.
However, no significant correlation has been found.

\subsection{Benchmarks' details}
In Table~\ref{dataset-table}, we provide dataset details including their complexity, size and available target fields.
In our experiments, we have left out pressure fields available for hemodynamic datasets to focus more on vector-valued fields that exhibit equivariance.
However, the experiments and models can be naturally extended to predict pressure, as shown for the ShapeNet-Car benchmark.

In Fig~\ref{fig:datasets-projections} we expand upon Fig~\ref{fig:datasets-speheres} to include point occupancy histogram in all Cartesian axes.

\begin{figure*}[ht!]
    \centering
    \includegraphics[width=\textwidth]{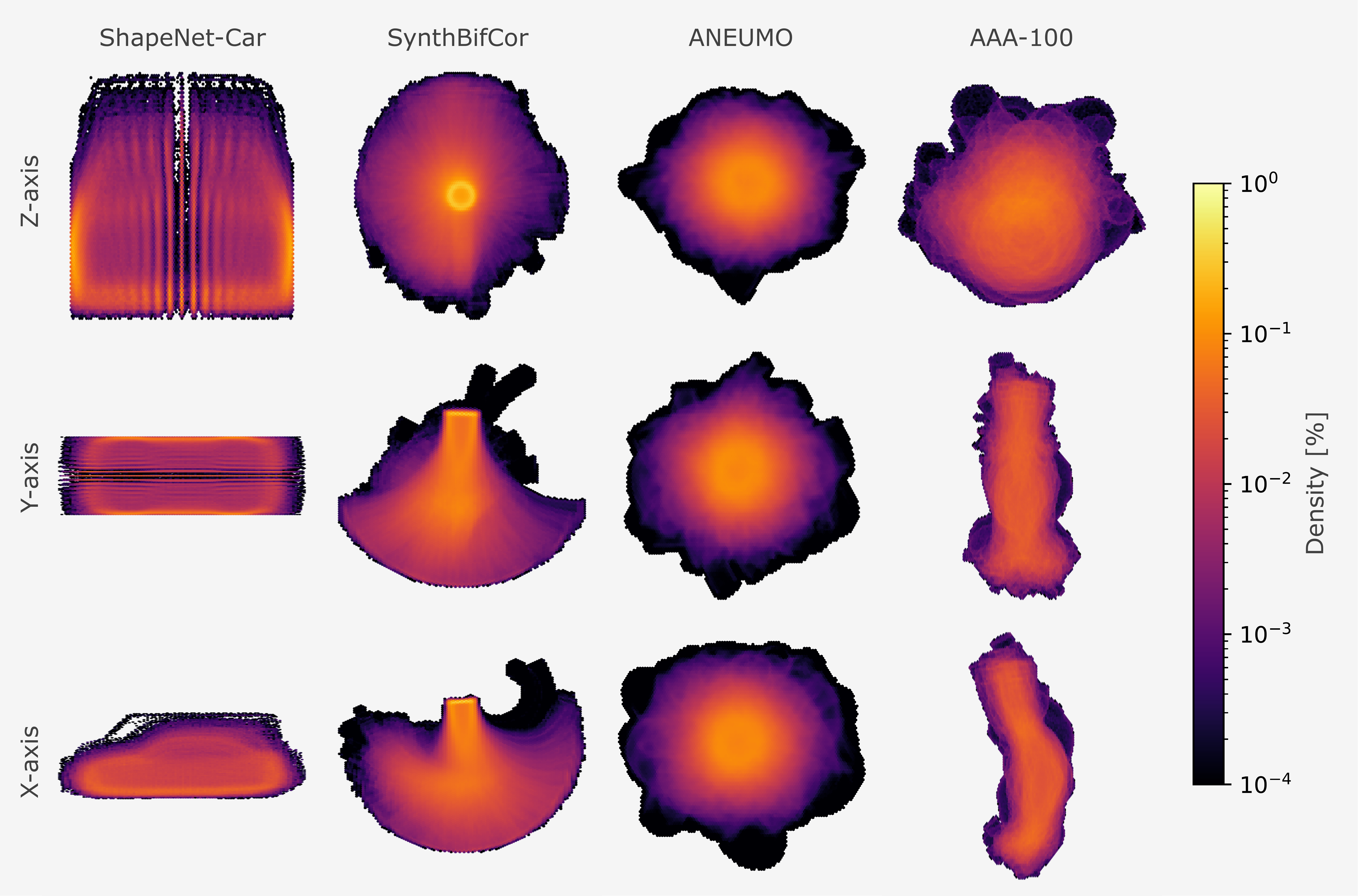}
    \caption{Visualisation of benchmarks' shape variation through dataset-wide histogram of point occupancy in all $3$ Cartesian axes.}
    \label{fig:datasets-projections}
\end{figure*}

\begin{table}[!ht]
  \caption[Short Heading]{Overview of aerodynamics and hemodynamics datasets used as benchmarks. By | we separate volume (left) and surface (right) quantities. The dataset split represents \texttt{train:valid:test} dataset sizes.}
  \label{dataset-table}
  \centering
  \begin{tabular}{llllll}
  \hline
        \toprule 
        \multicolumn{1}{c}{Name} & \multicolumn{1}{c}{Type} & \multicolumn{1}{c}{Mean \#points} &
        \multicolumn{1}{c}{Dataset split} & \multicolumn{1}{c}{Fields}  & \multicolumn{1}{c}{Timepoints}\\

        \midrule
        \multicolumn{1}{l}{ShapeNet-Car} & \multicolumn{1}{c}{aerodynamics} & \multicolumn{1}{c}{29k | 3.6k} & \multicolumn{1}{c}{789:0:100} & \multicolumn{1}{c}{$\boldsymbol{u}$ | $\boldsymbol{p}$} & \multicolumn{1}{c}{1} \\

        \multicolumn{1}{l}{SynthBifCor} & \multicolumn{1}{c}{hemodynamics} & \multicolumn{1}{c}{150k | 17k} & \multicolumn{1}{c}{1600:200:200} & \multicolumn{1}{c}{$\boldsymbol{u}, p$ | $\boldsymbol{\tau}$} & \multicolumn{1}{c}{1}\\

        \multicolumn{1}{l}{ANEUMO} & \multicolumn{1}{c}{hemodynamics} & \multicolumn{1}{c}{176k | 10k} & \multicolumn{1}{c}{8495:1072:1093} & \multicolumn{1}{c}{$\boldsymbol{u}, p$ | -} & \multicolumn{1}{c}{1}\\

        \multicolumn{1}{l}{AAA-100} & \multicolumn{1}{c}{hemodynamics} & \multicolumn{1}{c}{340k | 30k} & \multicolumn{1}{c}{90:0:10} & \multicolumn{1}{c}{$\boldsymbol{u}, p$ | $\boldsymbol{\tau}$} & \multicolumn{1}{c}{21}\\
        
        \bottomrule
  \end{tabular}
\end{table}

\section{On distributional symmetry breaking}
\label{app:distributional}

We investigate if the gap in efficacy of equivariant models that we observe between ShapeNet-Car and Aneumo can be explained by distributional symmetry breaking~\citep{lawrence2026to}.
To this end, we train a discriminative, binary classifier (cross-attention, 107,970 parameters, ca. 3k iterations) while considering group actions in $\mathrm{O}(3)$.
For Shapenet-Car, we find high symmetry bias at $m(p_X) \approx 0.99$ over the test split while for Aneumo $m(p_X) \approx 0.61$, suggesting low symmetry bias.
Following the flow chart proposed by \citet{lawrence2026to}, we also probe task-useful distribution symmetry breaking by training a regression model (self-attention, 429,441 parameters, 3,600 iterations, points and surface normal inputs) with canonicalised input embedding via a randomly initialised, untrained MLP operating in PGA~\citep{brehmer2023geometric}.
Comparing test split accuracy with and without $\mathrm{O}(3)$ augmentation we find $t(p_{X, Y}) \approx 2.54$ suggesting that the symmetry bias is relevant to the task.

\end{document}